\documentclass[11pt]{article}
\usepackage[margin=1in]{geometry}
\usepackage[T1]{fontenc}
\usepackage{lmodern}
\usepackage{amsmath,amssymb,amsthm}
\usepackage{booktabs,array}
\usepackage{graphicx}
\usepackage{microtype}
\usepackage[numbers,sort&compress]{natbib}
\usepackage[hidelinks]{hyperref}
\usepackage{url}
\newtheorem{proposition}{Proposition}
\theoremstyle{definition}
\newtheorem{example}{Example}
\newcommand{\KL}{D_{\mathrm{KL}}}
\newcommand{\E}{\mathbb E}

\newcommand{\cost}{C_\rho}
\newcommand{\ret}{J_{0,\rho}}
\title{IncentRL: The Trade-Off Between Preference Guidance and Task Performance}
\newif\ifauthorpending
\authorpendingtrue
\author{
\textnormal{Xuening Wu}$^{1}$,
\textnormal{Yanlan Kang}$^{1}$,
\textnormal{Shenqin Yin}$^{1}$\\
\normalfont $^{1}$Fudan University\\
}

\date{}
\begin{document}
\maketitle
\begin{abstract}
Preference-based reward shaping can guide reinforcement learning, but adding preference signals to the reward may unintentionally change the task being optimized. We address this problem with IncentRL, a framework that introduces preference guidance while explicitly characterizing its effect on external-task performance. IncentRL adds a Kullback--Leibler (KL) penalty between a specified outcome distribution and a preferred distribution. For finite discounted Markov decision processes with bounded shaping costs, we derive an external-value perturbation bound, establish a sufficient strict-action-gap condition for preserving the original optimal policy, and characterize the large-weight regime through discounted cumulative preference cost. Exact examples clarify the limits of these guarantees, including tied optima and support mismatch. We study a practical implementation using a hand-designed, distance-based outcome proxy, a fixed preference distribution, and score-weighted coefficient search. On MiniGrid DoorKey-8x8, the reported three-seed mean success rate after two million training steps reaches 98\% with coefficient 0.01, compared with 90.5\% for the reported zero-coefficient baseline, while the search progressively shifts toward smaller coefficients. Together, these results provide a principled view of the central trade-off in preference-based RL: using additional guidance to improve learning without excessively distorting the original task objective. The current experiments remain descriptive and do not yet isolate KL shaping from simpler alternatives.
\end{abstract}
\noindent\textbf{Keywords:} reinforcement learning; reward shaping; outcome preferences; KL divergence; policy preservation; adaptive parameter search.

\section{Introduction}
External reward and desired outcomes need not be expressed at the same level. A task may provide reward only at completion while a designer can specify which intermediate outcomes are preferable. Curiosity and information-gain methods offer one response to limited reward feedback by introducing additional learning signals~\citep{pathak2017curiosity,houthooft2016vime}. A different response is to encode task preferences directly and use them to modify the reward. Such a modification can guide behavior, but it can also change what the agent is being asked to optimize. A concrete question is when local preferences support long-horizon goal-directed behavior and when they obstruct it. For example, moving away from a goal may be necessary to reach a door or take a feasible route. Here, preferences are designer-specified mathematical quantities; no human preference measurements or cognitive model are assumed.

IncentRL uses the shaped reward
\begin{equation}
 r_\beta(s,a)=r(s,a)-\beta\KL\bigl(p(\cdot\mid s,a)\Vert q(\cdot\mid s)\bigr),
 \qquad\beta\geq0,
 \label{eq:reward}
\end{equation}
where $p$ represents outcomes associated with an action and $q$ specifies preferred outcomes. This expression raises several questions before any learning algorithm is chosen. Is the KL finite? Does the penalty preserve externally optimal behavior? What is optimized when $\beta$ becomes large? Does an implemented quantity actually represent predicted outcomes, or does it encode a hand-designed state score?

We address these questions for a restricted, explicit setting. The theoretical analysis fixes the outcome and preference distributions and studies their induced bounded cost in a finite discounted Markov decision process (MDP). We give a value-loss bound, a sufficient policy-preservation condition, and a large-weight limit statement. The proofs use standard bounded-reward perturbation and optimality arguments; they are included to establish the scope of this formulation, not as new general results in MDP theory. Exact examples demonstrate why weaker assumptions are insufficient.

We also examine an available MiniGrid implementation~\citep{incentrlcode}. It constructs a two-outcome vector from the agent's distance to a target after a transition. Its adaptation routine fits a distribution over coefficient candidates using scores from separate training runs. These mechanisms can be described precisely without identifying the distance vector with a calibrated prediction model or the fitted proposal with a Bayesian posterior.

This paper combines a well-defined preference-penalty objective, a self-contained analysis with counterexamples, and a characterization of the distance proxy and coefficient-search rule. Section~\ref{sec:empirical} evaluates fixed-coefficient shaping on MiniGrid and examines the behavior of coefficient search. A separate prospective study asks whether the observed benefit persists against information-matched shaping controls. The existing comparison does not establish a KL-specific mechanism, biological validity, or tuning-free adaptation.

\section{Related Work}
\paragraph{Reward shaping.}
Potential-based shaping adds $\gamma\Phi(s')-\Phi(s)$ to reward and preserves optimal policies under the corresponding assumptions~\citep{ng1999policy}. The penalty in Equation~\eqref{eq:reward} is generally not of this form. IncentRL instead permits a change in the optimal policy when preferences conflict with external reward. Its small-weight analysis quantifies a perturbation of the objective; it does not supply the general invariance property of potential-based shaping. Whether preserving the original policy is desirable depends on whether preferences are intended as guidance or as an additional task requirement.

\paragraph{Intrinsic motivation.}
Prediction-error curiosity~\citep{pathak2017curiosity} and information-gain exploration~\citep{houthooft2016vime} construct learning signals with different meanings from a preferred-outcome penalty. In particular, a KL between predicted and preferred outcomes is not automatically information gain about unknown dynamics. A preference distribution supplied by a designer encodes additional task information. The comparisons needed to evaluate its effect should control for access to that information.

\paragraph{Policy regularization and maximum-entropy RL.}
TRPO constrains policy change using a KL-based trust region~\citep{schulman2015trust}; PPO includes a clipped surrogate formulation and a KL-penalty alternative~\citep{schulman2017proximal}. The clipped formulation does not impose a hard KL constraint. Soft actor-critic augments its objective with action-policy entropy~\citep{haarnoja2018soft}. IncentRL's KL is over outcomes, not successive policies. Its outcome-entropy term, derived below, is likewise distinct from action entropy. A shared divergence notation does not make these objectives equivalent.

\paragraph{Preferences and probabilistic control.}
Active inference uses prior preferences over outcomes in a generative model and policy evaluation~\citep{friston2017active}. It need not modify those preferences during each decision. Our distinction is the addition of a local outcome penalty to an external-reward MDP; we do not derive a full expected-free-energy objective or its epistemic terms. The analogy to cognitive motivation provides context rather than a biological prediction.

\paragraph{Adaptive search distributions.}
Updating a sampling distribution from candidate performance is an established optimization approach, exemplified by the cross-entropy method~\citep{rubinstein1999cross}. The available IncentRL coefficient search belongs broadly to this family of adaptive sampling procedures. Its particular score-weighted moment update and prescribed concentration schedule are not identified here with the standard cross-entropy update, nor are they presented as a new general search algorithm. Sampling from a Beta distribution does not by itself constitute Bayesian inference.

\section{Method}\label{sec:method}
\subsection{Setting and support conditions}
Let $\mathcal S$ and $\mathcal A(s)$ be finite, with transition kernel $P$, bounded expected external reward $r$, discount $0\leq\gamma<1$, and initial distribution $\rho$. Let $\mathcal O$ be a finite outcome space. Fix normalized distributions $p(\cdot\mid s,a)$ and $q(\cdot\mid s)$. If $p$ is called predictive, its meaning must be specified, for example as the distribution of an outcome observation produced by the next transition. The theoretical results below concern the induced cost and do not establish that an estimated $p$ is accurate.

Define, using natural logarithms,
\begin{equation}
 c(s,a)=\sum_{o\in\mathcal O}p(o\mid s,a)
 \log\frac{p(o\mid s,a)}{q(o\mid s)}.
 \label{eq:cost}
\end{equation}
We use $0\log(0/q)=0$. A positive $p(o\mid s,a)$ paired with zero $q(o\mid s)$ makes the cost infinite. A convenient sufficient condition for boundedness is
\begin{equation}
 q(o\mid s)\geq q_{\min}>0\quad\text{for all }s,o,
 \qquad 0\leq c(s,a)\leq\log(1/q_{\min})=:C_{\max}.
 \label{eq:support}
\end{equation}
The upper bound follows because $\sum_o p_o\log p_o\leq0$ and
$-\sum_o p_o\log q_o\leq\log(1/q_{\min})$. Support inclusion alone is sufficient for a finite KL on each finite outcome space; Equation~\eqref{eq:support} supplies a transparent uniform bound.

For a preference $q_0$ with zeros, one explicit modification is
\begin{equation}
 q_\varepsilon(o\mid s)=(1-\varepsilon)q_0(o\mid s)+\varepsilon u(o),
 \qquad 0<\varepsilon<1,
 \label{eq:smoothing}
\end{equation}
where $u$ has full support. The smoothing level is an additional design parameter and changes the objective. It must be reported. In continuous outcome spaces, absolute continuity alone does not ensure finite or uniformly bounded KL; the finite-space result cannot be transferred without further assumptions.

\subsection{Returns and interpretation}
For a policy $\pi$, write
\begin{align}
 \ret(\pi)&=\E_{\rho,\pi}\!\left[\sum_{t=0}^{\infty}\gamma^t r(S_t,A_t)\right],\\
 \cost(\pi)&=\E_{\rho,\pi}\!\left[\sum_{t=0}^{\infty}\gamma^t c(S_t,A_t)\right],\\
 J_{\beta,\rho}(\pi)&=\ret(\pi)-\beta\cost(\pi).
 \label{eq:objective}
\end{align}
For finite $c$, $\beta=0$ recovers the external objective. A numerical implementation of that baseline should bypass penalty computation so that an invalid penalty cannot generate $0\times\infty$ or a NaN.

The identity
\begin{equation}
 -\KL(p\Vert q)=H(p)+\E_p[\log q(O)]
 \label{eq:entropy}
\end{equation}
separates outcome entropy from preference log-probability. At a fixed expected log preference, the entropy term favors higher, not lower, outcome entropy. Maximizing this expression is not generally an objective for making outcomes predictable. For uniform $q$ over $m$ outcomes, $c=\log m-H(p)$; this special case makes the distinction explicit. Neither outcome entropy nor proximity to a desired outcome establishes information gain about an unknown environment.

The value of $\beta$ depends on the reward scale. Multiplying $r$ and $\beta$ by the same positive constant multiplies the entire objective by that constant and leaves its maximizing policies unchanged. Consequently, $\beta=1$ has no universal interpretation as a balanced regime. The preference distribution and its smoothing also determine the effective scale of the penalty.

\subsection{Use with a learning algorithm}
For fixed $\beta$, a generic implementation can compute $c(s,a)$, form $r-\beta c(s,a)$, and pass that reward to its RL update. This specifies a new reward function, not a new policy optimizer. For instance, a tabular Q-learning target would be
\[
 r_t-\beta c(s_t,a_t)+\gamma\max_{a'}Q(s_{t+1},a'),
\]
with zero bootstrap at a true terminal state. Time-limit truncation must be distinguished from task termination according to the intended MDP. This expression alone gives no convergence guarantee for a function approximator.

If $p$ or $q$ is learned during training, the reward can change over time. Updating a predictor to fit observed transitions has a different purpose from changing it merely to make its predictions resemble preferences. The latter can reduce measured KL without changing actual outcomes. The analysis below fixes these distributions and therefore does not cover this nonstationarity. A partially observed implementation also needs a state, belief, or history representation appropriate to its problem; a raw observation is not automatically Markov.

\subsection{Distance-based outcome proxy}\label{sec:implementation}
The following construction instantiates the preference penalty using goal distance and selects its coefficient through an outer-loop search. The formulas describe the public implementation~\citep{incentrlcode}; the correspondence between that code version and the experimental runs has not been independently established. Version-specific behavior and reproducibility details are collected in Appendix~\ref{app:implementation}.

For a transition to state $s'$, let $d$ be the Euclidean distance from the successor agent position to the target. A distance-based outcome proxy is
\begin{equation}
 u(d)=\frac{e^{-d}}{1+e^{-d}},\qquad
 \widetilde p_d=(u(d),1-u(d)),\qquad q=(0.999,0.001).
 \label{eq:proxy}
\end{equation}
The public implementation evaluates the following numerical approximation to the KL penalty:
\begin{equation}
 f_\eta(d)=\sum_{i=1}^2\widetilde p_{d,i}
       \log\frac{\widetilde p_{d,i}+\eta}{q_i+\eta},
 \qquad\eta=10^{-8}.
 \label{eq:implementedcost}
\end{equation}
Equation~\eqref{eq:implementedcost} records the actual numerical adjustment; it is not the exact KL in Equation~\eqref{eq:cost}. For interpretation, let $f_0$ be the exact KL with the same two normalized distributions.

The proxy is hand-designed, not trained from transition observations. For $d\geq0$, its first component is at most $1/2$, including at the goal. Consequently, it should not be described as a calibrated probability of goal attainment. Its exact penalty satisfies
\begin{equation}
 f_0(0)\approx2.76123,\qquad
 \lim_{d\to\infty}f_0(d)=\log1000\approx6.90776.
 \label{eq:proxybounds}
\end{equation}
For $q=(q_1,q_2)$, the derivative of the binary KL with respect to $u$ is
\[
 \frac{d}{du}\KL((u,1-u)\Vert q)
 =\log\frac{u q_2}{(1-u)q_1}.
\]
It is negative for $0<u\leq1/2$ when $q_1=0.999$, while $u'(d)<0$. Thus $f_0$ increases with distance. The transformation is a nonlinear distance penalty with a nonzero constant component, not an independently learned distributional signal. No superiority over a simpler distance penalty follows from writing it as a KL.

Both agent coordinates and target information should be available to information-matched baselines.

Since the penalty is computed after the action, its natural form is
$r(s,a,s')-\beta f(s')$. If this is a fixed bounded Markov reward, its expected penalty $c(s,a)=\E[f(S')\mid s,a]$ falls within Section~\ref{sec:theory}. This mapping does not turn $\widetilde p_d$ into an action-conditioned prediction model. In episodic tasks the nonzero component is also consequential: paying a constant penalty until termination can favor earlier termination. By contrast, the same constant paid forever in a continuing discounted task adds a policy-independent value shift.

\subsection{Coefficient selection by adaptive distribution search}\label{sec:coefficient-search}
The search procedure samples coefficient candidates from a Beta proposal, trains a fresh PPO agent for each candidate and training seed, and uses final evaluation success to update the next round's proposal. The coefficient is fixed within each such training run. The procedure is therefore an outer-loop search, not a within-agent online coefficient update.

Let $i$ index candidate--seed pairs with positive score $y_i$ in a round. The procedure computes
\begin{equation}
 w_i=\frac{y_i}{\sum_jy_j},\quad
 m=\sum_iw_i\beta_i,\quad
 v=\max\left\{\sum_iw_i(\beta_i-m)^2,10^{-6}\right\}.
 \label{eq:searchmoments}
\end{equation}
It then moment-matches Beta shape parameters
\[
 t=\frac{m(1-m)}{v}-1,\qquad a_{\rm target}=mt,\quad b_{\rm target}=(1-m)t,
\]
and rescales their sum to
\begin{equation}
 \kappa_r=\min\{2(1.618)^{r+3},100\},\qquad r=0,1,\ldots.
 \label{eq:concentration}
\end{equation}
Whenever the moment match has positive shape parameters, the final distribution simplifies to
\begin{equation}
 \operatorname{Beta}\bigl(\kappa_r m,\kappa_r(1-m)\bigr).
 \label{eq:searchfinal}
\end{equation}
The empirical variance in Equation~\eqref{eq:searchmoments} no longer controls the final concentration. Valid moment matching requires $0<v<m(1-m)$ and $0<m<1$; the variance floor alone does not guarantee these conditions near a boundary. A robust implementation must check them before sampling again.

If there are no positive scores, the script leaves the first shape parameter unchanged, increments the second by ten, and rescales to the prescribed concentration. These are specified search heuristics. No likelihood for the update is defined, and a shrinking proposal is not a calibrated uncertainty statement. Even in a valid Bayesian model, posterior concentration would not by itself demonstrate better task performance or the removal of tuning choices.

\section{Theoretical Analysis}\label{sec:theory}
The results in this section hold for any fixed cost $0\leq c\leq C_{\max}$ in the finite discounted MDP above. Their applicability to a particular KL follows from establishing that bound. These are statements about optimal objectives, not rates of learning.

\begin{proposition}[External-value perturbation]\label{prop:value}
Let $\pi_0$ maximize $\ret$ and $\pi_\beta$ maximize $J_{\beta,\rho}$. Then
\begin{equation}
 0\leq\ret(\pi_0)-\ret(\pi_\beta)
 \leq\beta\bigl(\cost(\pi_0)-\cost(\pi_\beta)\bigr)
 \leq\frac{\beta C_{\max}}{1-\gamma}.
 \label{eq:valuebound}
\end{equation}
\end{proposition}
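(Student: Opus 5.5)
The plan is to prove all three inequalities in \eqref{eq:valuebound} directly from the two optimality properties, with no Bellman-operator machinery. Everything reduces to comparing the objectives of $\pi_0$ and $\pi_\beta$ under $J_{0,\rho}$ and under $J_{\beta,\rho}$. Existence of maximizers is taken as given in the statement; in a finite discounted MDP deterministic stationary optimal policies exist, so both maximizers can be taken in that class if needed.

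\emph{First inequality.} Since $\pi_0$ maximizes $\ret$ over all policies, including $\pi_\beta$, we get $\ret(\pi_0)\geq\ret(\pi_\beta)$ at once.

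\emph{Second inequality.} Since $\pi_\beta$ maximizes $J_{\beta,\rho}=\ret-\beta\cost$, we have $\ret(\pi_\beta)-\beta\cost(\pi_\beta)\geq\ret(\pi_0)-\beta\cost(\pi_0)$. All quantities are finite because $0\leq c\leq C_{\max}$ and $\gamma<1$, so this rearranges to $\ret(\pi_0)-\ret(\pi_\beta)\leq\beta(\cost(\pi_0)-\cost(\pi_\beta))$. Combined with the first inequality, this also shows $\beta(\cost(\pi_0)-\cost(\pi_\beta))\geq0$.

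\emph{Third inequality.} For any policy, pointwise bounds on $c$ and monotonicity of expectation give
\[
0\leq\cost(\pi)\leq\sum_{t\geq0}\gamma^tC_{\max}=\frac{C_{\max}}{1-\gamma}.
\]
The interchange of sum and expectation is justified by nonnegativity (Tonelli) or by dominated convergence. Hence $\cost(\pi_0)-\cost(\pi_\beta)\leq\cost(\pi_0)\leq C_{\max}/(1-\gamma)$, and multiplying by $\beta\geq0$ finishes the proof.

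There is no substantive obstacle. The only points needing care are finiteness of the returns, so that the rearrangement in the second inequality is legitimate (no $\infty-\infty$), and the sign of $\beta$ in the last step. Both follow from the standing assumptions of bounded $r$, the bound \eqref{eq:support}, and $\beta\geq0$.
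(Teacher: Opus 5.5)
Your proof is correct and is the paper's argument: external optimality gives the lower bound, rearranging the shaped-optimality inequality gives the middle bound, and $0\leq C_\rho(\pi)\leq C_{\max}/(1-\gamma)$ together with $\beta\geq0$ gives the upper bound. Your remarks on finiteness (no $\infty-\infty$ in the rearrangement) and on the interchange of sum and expectation spell out details the paper leaves implicit.
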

\begin{proof}
Shaped optimality gives
$\ret(\pi_\beta)-\beta\cost(\pi_\beta)
\geq\ret(\pi_0)-\beta\cost(\pi_0)$.
Rearranging proves the middle inequality. External optimality proves the lower bound, and $0\leq\cost(\pi)\leq C_{\max}/(1-\gamma)$ proves the last inequality.
\end{proof}

This result does not require a unique external optimum. If a learned policy is only $\epsilon$-optimal for the shaped objective, the same argument yields the weaker external-value bound $\beta C_{\max}/(1-\gamma)+\epsilon$. It does not supply a value of $\epsilon$ for an actual training run.

\begin{proposition}[Policy preservation under a strict action gap]\label{prop:gap}
Assume every state has a unique externally optimal action $a^*(s)$. Over the states with alternative actions, define
\begin{equation}
 \Delta=\min_{s:\,|\mathcal A(s)|>1}\min_{a\ne a^*(s)}
 \bigl[Q_0^*(s,a^*(s))-Q_0^*(s,a)\bigr]>0.
 \label{eq:gap}
\end{equation}
If $C_{\max}>0$ and $\beta C_{\max}/(1-\gamma)<\Delta$, then $s\mapsto a^*(s)$ remains optimal from every state for the shaped reward. If $C_{\max}=0$, or there is no state with alternative actions, the conclusion holds without this threshold.
\end{proposition}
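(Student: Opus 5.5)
The plan is a bounded-perturbation comparison of $Q$-functions. We keep the deterministic policy $\pi^*:s\mapsto a^*(s)$ fixed and show that its shaped action values $Q_\beta^{\pi^*}$ still make $a^*(s)$ a greedy action in every state. By the standard policy-improvement characterization in finite discounted MDPs, a policy that is greedy with respect to its own action-value function is optimal from every state. This is the conclusion we want.

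\emph{Step 1 (decomposition).} For the fixed policy $\pi^*$ and every pair $(s,a)$ we write
\[
Q_\beta^{\pi^*}(s,a)=Q_0^{\pi^*}(s,a)-\beta\,Q_c^{\pi^*}(s,a).
\]
Here $Q_c^{\pi^*}$ is the discounted cumulative cost obtained by taking $a$ in $s$ and following $\pi^*$ afterward. The identity holds by linearity of expectation, since $r_\beta=r-\beta c$ and the policy is fixed. Because $\pi^*$ is externally optimal (it is greedy for $Q_0^*$ and attains $V_0^*$), we have $Q_0^{\pi^*}=Q_0^*$. Because $0\le c\le C_{\max}$, we have $0\le Q_c^{\pi^*}(s,a)\le C_{\max}/(1-\gamma)$.

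\emph{Step 2 (gap comparison).} Fix a state $s$ with $|\mathcal A(s)|>1$ and an action $a\ne a^*(s)$. Then
\[
Q_\beta^{\pi^*}(s,a^*(s))-Q_\beta^{\pi^*}(s,a)\ge \Delta-\beta\bigl(Q_c^{\pi^*}(s,a^*(s))-Q_c^{\pi^*}(s,a)\bigr)\ge\Delta-\frac{\beta C_{\max}}{1-\gamma}>0.
\]
The middle inequality uses that a difference of two numbers in $[0,C_{\max}/(1-\gamma)]$ is at most $C_{\max}/(1-\gamma)$. This sharper range bound is why the threshold involves $C_{\max}/(1-\gamma)$ and not twice that. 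Single-action states are trivially greedy. So $\pi^*$ is strictly greedy with respect to $Q_\beta^{\pi^*}$, and policy improvement gives $V_\beta^{\pi^*}\ge V_\beta^{\pi'}$ for every $\pi'$ from every state. Equivalently, $V_\beta^{\pi^*}$ satisfies the shaped Bellman optimality equation, whose fixed point is unique. Hence $\pi^*$ is shaped-optimal. The strict inequality also shows that $a^*(s)$ is the unique shaped-optimal action in each state.

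\emph{Step 3 (degenerate cases).} If $C_{\max}=0$, then $c\equiv0$, so the shaped and external objectives coincide and the conclusion is immediate. If no state has an alternative action, there is only one deterministic policy, and it is trivially optimal. In both cases $\Delta$ may be undefined, which is why these cases are stated separately.

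The main obstacle is being careful that the comparison uses $Q_\beta^{\pi^*}$ rather than $Q_\beta^*$. Bounding $|Q_\beta^*-Q_0^*|$ directly gives only a two-sided perturbation of size $\beta C_{\max}/(1-\gamma)$ on each term, and therefore a factor-two worse threshold. Evaluating the fixed policy and using the nonnegativity of $c$ avoids this loss. The remaining care is to invoke the policy-improvement/Bellman-optimality fact for finite discounted MDPs correctly, including the treatment of terminal states as absorbing with zero reward and zero cost.
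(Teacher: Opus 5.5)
Your proof is correct, and it takes a different route from the paper's. The paper never evaluates the candidate policy. With $B=\beta C_{\max}/(1-\gamma)$, it observes that for every continuation policy the discounted penalty lies in $[0,B]$. Taking suprema gives the bracket $Q_0^*(s,a)-B\le Q_\beta^*(s,a)\le Q_0^*(s,a)$. It then combines the lower bound at $a^*(s)$ with the upper bound at $a$ to obtain $Q_\beta^*(s,a^*(s))-Q_\beta^*(s,a)\ge\Delta-B>0$, and concludes because a policy greedy for $Q_\beta^*$ is optimal.

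You instead evaluate the single policy $\pi^*$, write $Q_\beta^{\pi^*}=Q_0^*-\beta Q_c^{\pi^*}$ by linearity, and finish with policy improvement (equivalently, uniqueness of the Bellman-optimality fixed point). Your version needs no supremum over continuation policies. It also exposes a sharper intermediate criterion before the worst-case range bound is applied: the condition
\[
\beta\bigl(Q_c^{\pi^*}(s,a^*(s))-Q_c^{\pi^*}(s,a)\bigr)\le Q_0^*(s,a^*(s))-Q_0^*(s,a)\quad\text{for all }(s,a)
\]
is actually equivalent to $\pi^*$ staying optimal from every state, because $Q_\beta^{\pi^*}=Q_\beta^*$ whenever $\pi^*$ is shaped-optimal. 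This condition can be checked with a single policy evaluation of the cost, whereas $B<\Delta$ is only a worst-case sufficient test. The paper's version, for its part, yields a bracket on the optimal shaped values $Q_\beta^*$ themselves, which holds regardless of which policy turns out to be optimal.

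One correction to your final paragraph: working with $Q_\beta^*$ does not cost a factor of two. The saving comes from $c\ge0$, which makes the perturbation one-sided; it does not come from fixing the policy. The paper works with $Q_\beta^*$ and obtains exactly the threshold $B<\Delta$. Only the symmetric estimate $|Q_\beta^*-Q_0^*|\le B$ would lose the factor.
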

\begin{proof}
Let $B=\beta C_{\max}/(1-\gamma)$. For each initial state-action pair and every continuation policy, subtracting the discounted penalty reduces return by an amount in $[0,B]$. Taking suprema over continuation policies gives
\[
 Q_0^*(s,a)-B\leq Q_\beta^*(s,a)\leq Q_0^*(s,a).
\]
For every alternative action,
\begin{align*}
 Q_\beta^*(s,a^*(s))-Q_\beta^*(s,a)
 &\geq Q_0^*(s,a^*(s))-Q_0^*(s,a)-B\\
 &\geq\Delta-B>0.
\end{align*}
Thus the maximizing action at each state is unchanged and defines an optimal stationary policy.
\end{proof}

The strict gap matters. Multiple external optima can have different preference costs and be separated by any positive penalty weight. Moreover, the collection of stochastic policies need not have a positive gap between optimal and suboptimal returns, even in a finite MDP. Proposition~\ref{prop:gap} is a sufficient condition, not a practical certificate unless the gap and cost bound are known.

\begin{proposition}[Large-weight limit]\label{prop:large}
Suppose $|r(s,a)|\leq R_{\max}$, and let $C_{\min}=\min_\pi\cost(\pi)$. For $\beta>0$, every maximizer $\pi_\beta$ of $J_{\beta,\rho}$ satisfies
\begin{equation}
 0\leq\cost(\pi_\beta)-C_{\min}
 \leq\frac{2R_{\max}}{\beta(1-\gamma)}.
 \label{eq:largebeta}
\end{equation}
Restricting the sequence to stationary randomized policies, every accumulation point as $\beta\to\infty$ minimizes $\cost$ and maximizes $\ret$ within that minimum-cost set. A unique policy limit is not asserted.
\end{proposition}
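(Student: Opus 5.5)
Let $\pi_c$ attain $C_{\min}$; a minimizing stationary deterministic policy exists because $\cost$ is the value of a finite discounted MDP with reward $-c$. The bound in \eqref{eq:largebeta} will follow from comparing $\pi_\beta$ with $\pi_c$, in the same way Proposition~\ref{prop:value} compared $\pi_\beta$ with $\pi_0$. Shaped optimality gives
\[
 \ret(\pi_\beta)-\beta\cost(\pi_\beta)\geq\ret(\pi_c)-\beta C_{\min}.
\]
Rearranging yields $\beta\bigl(\cost(\pi_\beta)-C_{\min}\bigr)\leq\ret(\pi_\beta)-\ret(\pi_c)$. Since $|r|\leq R_{\max}$, every return lies in $[-R_{\max}/(1-\gamma),R_{\max}/(1-\gamma)]$, so the right side is at most $2R_{\max}/(1-\gamma)$. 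Dividing by $\beta>0$ gives the upper bound, and the lower bound is immediate from the definition of $C_{\min}$.

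For the limit statement, I would parametrize stationary randomized policies by the compact product of simplices $\prod_s\Delta(\mathcal A(s))$. On this set, $\ret$ and $\cost$ are continuous. Continuity follows from writing $\ret(\pi)=\rho^\top(I-\gamma P_\pi)^{-1}r_\pi$: the inverse exists and depends continuously on $\pi$ because $\gamma P_\pi$ has spectral radius less than one. The minimizer $\pi_\beta$ should be taken within stationary policies. This is harmless, since the shaped problem is itself a finite discounted MDP and therefore admits a stationary optimum, and the bound above still applies. Take $\beta_k\to\infty$ with $\pi_{\beta_k}\to\bar\pi$. Then $\cost(\bar\pi)=\lim\cost(\pi_{\beta_k})=C_{\min}$ by \eqref{eq:largebeta} and continuity, so $\bar\pi$ minimizes cost.

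For the lexicographic property, I would compare with any $\pi'$ in the minimum-cost set. Shaped optimality gives $\ret(\pi_{\beta_k})-\beta_k\cost(\pi_{\beta_k})\geq\ret(\pi')-\beta_kC_{\min}$. Because $\cost(\pi_{\beta_k})\geq C_{\min}$, this implies $\ret(\pi_{\beta_k})\geq\ret(\pi')$. Passing to the limit with continuity gives $\ret(\bar\pi)\geq\ret(\pi')$. Hence $\bar\pi$ maximizes $\ret$ over the minimum-cost stationary policies. The non-uniqueness caveat needs no proof, though a short remark could note that ties in the lexicographic problem allow different subsequences to converge to different limits.

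I expect the main obstacle to be the continuity and compactness bookkeeping. This includes justifying that the policy class is compact, that stationary optima exist so $\pi_\beta$ can be chosen in that class, and that the minimum-cost set is taken over the same class in which $C_{\min}$ is attained. The comparison inequalities themselves are routine.
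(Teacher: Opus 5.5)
Your proposal is correct and matches the paper's proof essentially step for step. It uses the comparison with a cost minimizer for the bound, compactness of the product of simplices with continuity via $(I-\gamma P_\pi)^{-1}$ for accumulation points, and the inequality $\ret(\pi_\beta)\geq\ret(\pi)+\beta(\cost(\pi_\beta)-C_{\min})\geq\ret(\pi)$ passed to the limit, while your remarks on the existence of stationary optima fill in bookkeeping the paper leaves implicit.
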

\begin{proof}
Choose a cost minimizer $\pi_C$. Optimality implies
\[
 \beta\bigl(\cost(\pi_\beta)-C_{\min}\bigr)
 \leq\ret(\pi_\beta)-\ret(\pi_C)
 \leq\frac{2R_{\max}}{1-\gamma}.
\]
This proves the bound. The space of stationary randomized policies is a finite product of probability simplices and is therefore compact. Its discounted reward and cost values are continuous. For example, these values have the form $(I-\gamma P_\pi)^{-1}r_\pi$ and $(I-\gamma P_\pi)^{-1}c_\pi$, and the inverse exists for every policy. Any convergent subsequence therefore has a limit with cost $C_{\min}$. For any minimum-cost policy $\pi$,
\[
 \ret(\pi_\beta)\geq\ret(\pi)
       +\beta\bigl(\cost(\pi_\beta)-C_{\min}\bigr)
 \geq\ret(\pi).
\]
Passing to the same subsequence proves external-return maximization within the minimum-cost set.
\end{proof}

The quantity minimized in this limit is a cumulative control cost. Its optimum depends on dynamics, discounting, the outcome model, preferences, and the initial distribution. A minimum-cost statement for one $\rho$ imposes no constraint on behavior at states that cannot affect its value. It is not a prescription to greedily minimize immediate KL at every state.

\subsection{An illustrative reward--preference trade-off}
The following exact example illustrates the policy change permitted by the bounds above. It is a deterministic calculation, not a learning experiment.

\begin{example}[An explicit reward--preference trade-off]\label{ex:tradeoff}
Consider a one-step decision, equivalently a problem with $\gamma=0$. The aligned action has reward zero and $p_a=q=(3/4,1/4)$, hence zero cost. The reward-seeking action has reward one and $p_b=(0,1)$, hence cost $\log4$. Their shaped values are zero and $1-\beta\log4$, so the maximizing action switches at
\[
 \beta_c=\frac{1}{\log4}\approx0.72135.
\]
Below the threshold the external return is one; above it the external return is zero. At the threshold both actions, and their mixtures, are optimal. Here $\Delta=1$ and $C_{\max}=\log4$, so the sufficient threshold in Proposition~\ref{prop:gap} matches the switch. The example shows why strong preferences can deliberately sacrifice external reward.
\end{example}

Additional examples in Appendix~\ref{app:examples} examine support failure, ties among externally optimal policies, and the difference between immediate and cumulative cost. Table~\ref{tab:exact} in that appendix gives numerical values for Example~\ref{ex:tradeoff}.

\section{Experiments}\label{sec:empirical}
Our experiments examine two aspects of IncentRL: the effect of a fixed shaping coefficient on MiniGrid learning, and the evolution of sampled coefficients across search rounds. The fixed-coefficient comparison measures task success and episode length, while the search analysis characterizes the coefficient distribution. Together they connect the preference-penalty formulation to observed learning behavior. The six-group mechanism study in Section~\ref{sec:prospective} is a separate, planned extension and is not part of the results presented here.

\subsection{Fixed-coefficient learning curves}
We use MiniGrid DoorKey-8x8, requiring the agent to obtain a key, unlock a door, and reach the goal. We compare coefficients $\beta=0$ and $\beta=0.01$, with the same architecture and training hyperparameters across conditions, two million training steps per agent, and three independent training seeds. Success rate and mean episode length are tracked during training. The correspondence between the experiment-generating code and the public version examined in Appendix~\ref{app:implementation} is not established by the available records. The implementation issues in Appendix~\ref{app:implementation} therefore cannot be assumed to have affected this comparison.

Table~\ref{tab:experiment-config} summarizes the experimental settings documented for these figures and the remaining reporting gaps. The public implementation provides candidate configuration details, but without a run-to-code link those details cannot establish the settings used to generate these results. In particular, the prospective protocol in Section~\ref{sec:prospective} is not used to fill these gaps.

\begin{table}[!htbp]
\centering
\small
\caption{Documented settings and reporting gaps for the MiniGrid experiments. Unavailable details are not inferred from library defaults or from the prospective protocol.}
\label{tab:experiment-config}
\begin{tabular}{@{}p{0.30\linewidth}p{0.65\linewidth}@{}}
\toprule
Item & Available information \\
\midrule
Environment & MiniGrid DoorKey-8x8 \\
Fixed coefficients & $\beta=0$ (reported baseline) and $\beta=0.01$ \\
Training budget & Two million reported steps per agent \\
Training seeds & Three for the fixed-coefficient comparison; identifiers not supplied. Search plots show seeds 42 and 43. \\
Architecture and optimizer & Reported as shared across fixed-coefficient conditions; exact network and optimizer settings are not recorded in the available experiment documentation. \\
Observations and versions & Observation encoding, wrappers, and environment/library versions are not documented for the figure-generating runs. \\
Evaluation & Success rate and mean episode length at training checkpoints; episode count, reset seeds, and action-selection rule are not documented. \\
Episode-length averaging & Whether the mean includes all episodes or only successful episodes is not specified. \\
Search summaries & Four rounds; per-round mean, SD, minimum, and maximum. Candidate counts and the cross-seed aggregation rule are not documented. \\
\bottomrule
\end{tabular}
\end{table}

\begin{figure}[!htbp]
\centering
\includegraphics[width=0.82\linewidth]{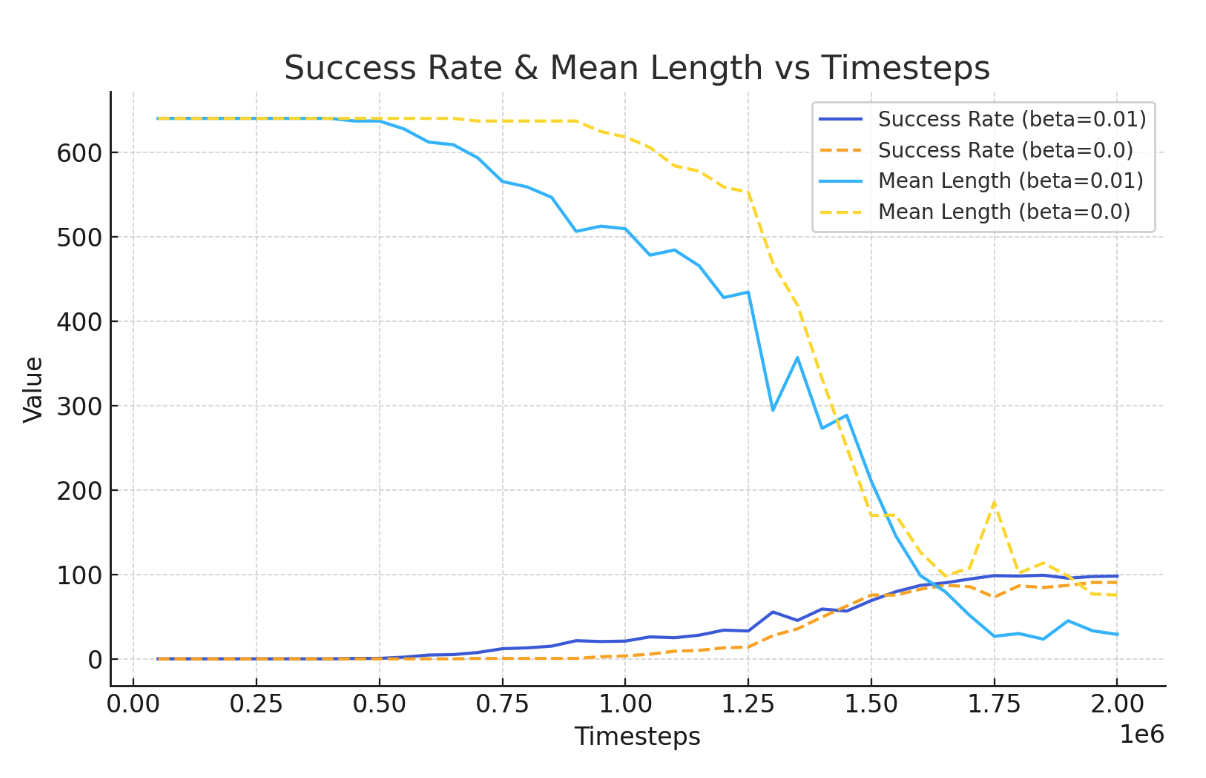}
\caption{Learning curves on MiniGrid DoorKey-8x8 for $\beta=0$ and $\beta=0.01$, averaged over three seeds with two million training steps per agent. The shared vertical axis combines success rate in percent and episode length in steps. No uncertainty bands are shown; the plot supports a descriptive comparison of the mean trajectories.}
\label{fig:minigrid}
\end{figure}

At the final reported checkpoint, the mean success rate is 98\% for $\beta=0.01$ and 90.5\% for the reported $\beta=0$ baseline, an absolute difference of 7.5 percentage points. The reported mean episode lengths are 29 and 75 steps, respectively. These endpoints and the trajectories in Figure~\ref{fig:minigrid} support a descriptive advantage in this particular comparison. They do not establish statistical significance, asymptotic superiority, or a general improvement across tasks. The available aggregate statistics do not specify whether episode length is averaged over all episodes or only successful ones, so the length difference is not interpreted as a controlled measure of path efficiency. Run-level data would be needed to quantify uncertainty and analyze learning-curve area under a defined evaluation protocol.

Both conditions improve late in training, with substantial success gains after roughly 1.25 million steps. No causal attribution to exploration or policy-update stabilization follows from these curves alone. Unshaped reward is the only displayed control: the comparison does not separate distance information, the nonlinear transformation, and a constant per-step cost. It also does not establish that coefficient search outperforms fixed-coefficient or random search at equal total cost.

\subsection{Coefficient-search observations}
Table~\ref{tab:searchsummary} summarizes sampled coefficients across search rounds, and Figure~\ref{fig:searchrounds} shows their mean and standard deviation. The mean decreases from 0.1057 in round one to 0.0173 in round four. Dispersion decreases through round three, then increases from 0.0098 to 0.0182 in round four. The spread of sampled coefficients is not a confidence interval for an optimal coefficient or calibrated posterior uncertainty.

\begin{table}[!htbp]
\centering
\caption{Summary statistics of sampled coefficients across four search rounds. Values are shown to four decimal places. The available aggregates do not specify the exact pooling or averaging rule across seeds.}
\label{tab:searchsummary}
\begin{tabular}{rrrrr}
\toprule
Round & Mean $\beta$ & SD & Minimum & Maximum \\
\midrule
1 & 0.1057 & 0.0573 & 0.0078 & 0.1805 \\
2 & 0.0454 & 0.0322 & 0.0074 & 0.0977 \\
3 & 0.0194 & 0.0098 & 0.0035 & 0.0351 \\
4 & 0.0173 & 0.0182 & 0.0000 & 0.0587 \\
\bottomrule
\end{tabular}
\end{table}

\begin{figure}[!htbp]
\centering
\includegraphics[width=0.85\linewidth]{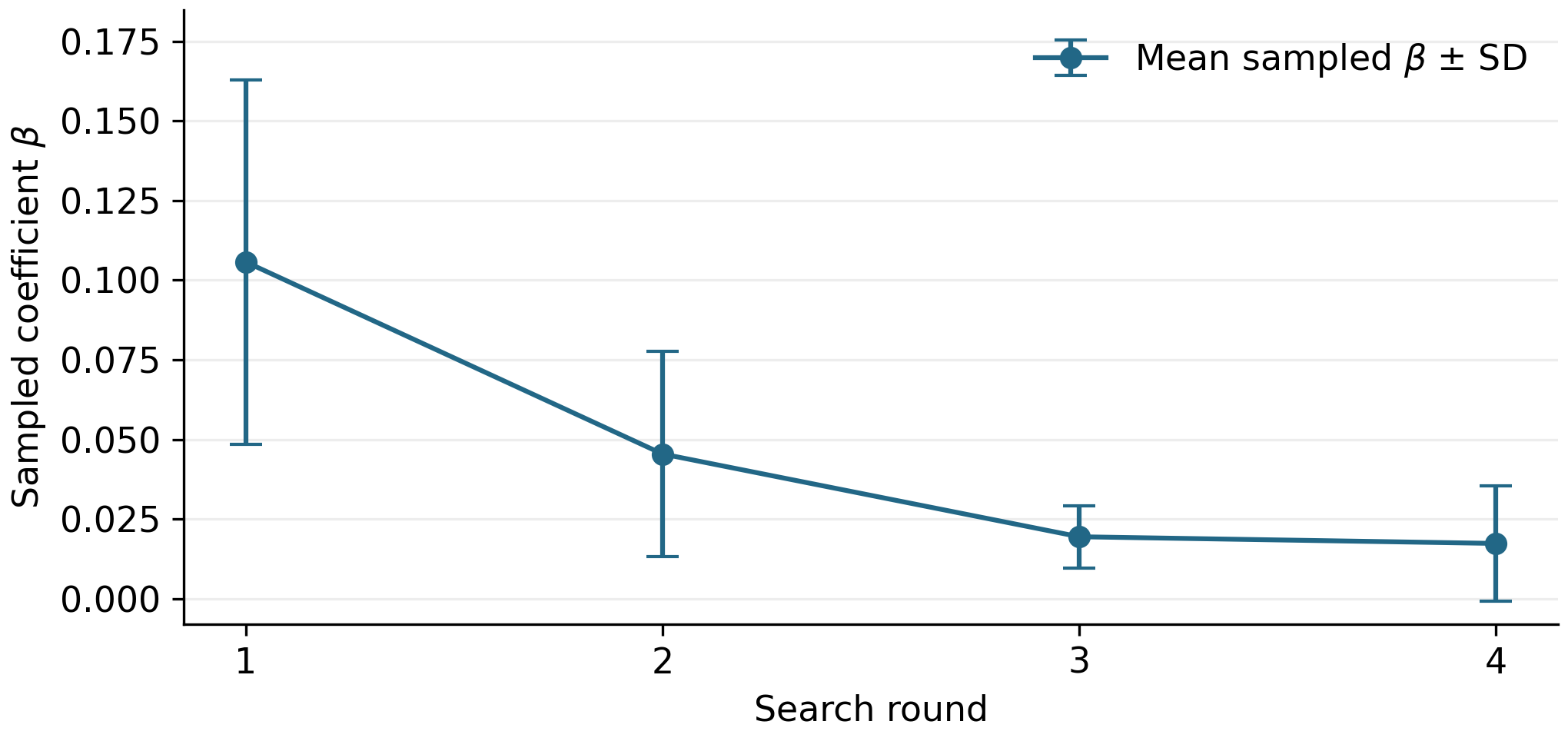}
\caption{Mean and standard deviation of sampled coefficients across search rounds, using the values in Table~\ref{tab:searchsummary}. Bars denote the reported standard deviation of sampled coefficients, not uncertainty in the mean or a Bayesian credible interval. A symmetric mean-plus/minus-SD bar can extend below zero even when all samples are nonnegative. No raw samples are reconstructed.}
\label{fig:searchrounds}
\end{figure}

These summaries document coefficient-search behavior. Supplementary distributions for individual seeds are provided in Appendix~\ref{app:search-distributions} (Figure~\ref{fig:searchsamples}). Interpreting them as an outer-loop adaptive search is consistent with the available procedure in Section~\ref{sec:coefficient-search}; a Bayesian interpretation would require a specified likelihood and posterior update. A numerical resemblance between later sampled values and a useful fixed coefficient is not a held-out performance comparison or a demonstration that tuning has been eliminated.

\section{Discussion and Limitations}
The evidence combines mathematical analysis, exact examples, implementation characterization, and MiniGrid experiments. The empirical comparison is descriptive: the available records do not establish exact experiment-generating versions, per-seed evaluation results, or all aggregation details. These gaps limit reproducibility and uncertainty quantification. Appendix~\ref{app:scope} summarizes the scope of the claims.

The theory assumes a fixed bounded penalty, exact optimization when referring to $\pi_\beta$, and a finite discounted MDP. It does not establish convergence for neural function approximation, learned or changing preferences, or adaptive $\beta_t$. The worst-case bound can be loose, especially when $\gamma$ is close to one or preference probabilities are very small. The strict action gap may also be unknown or absent. These limitations constrain both theoretical interpretation and practical parameter selection.

Preference quality is an independent issue. A task-informed penalty can be useful guidance or a source of bias. Euclidean distance can conflict with necessary detours, unlocking a door, or building momentum. A preference penalty does not guarantee exploration, safety, or transfer to a new task. Learned outcome models introduce additional approximation and calibration questions that the distance proxy does not answer.

\subsection{A prospective test of local guidance and task completion}\label{sec:prospective}
The next empirical question is whether a nonlinear distance penalty improves learning beyond direct distance guidance when observations, coefficient-selection opportunities, and training budgets are matched. The associated mechanism question is whether any difference depends on a constant cost paid until termination. These questions follow from the proxy analysis; they do not assume that the KL construction is beneficial. A locally frozen design, dated 18 September 2026, specifies the following study. Its implementation and training results are pending, and it has not been externally preregistered.

Let $f_0$ denote the exact proxy KL from Section~\ref{sec:implementation}, $D$ a fixed upper distance bound, $K=f_0(D)$, and $c_0=f_0(0)/K$. The planned penalties are normalized for an explicit coefficient grid, and differ from the unnormalized coefficients in Section~\ref{sec:empirical}. With $d'$ measured at the actual successor state, the six groups are:
\begin{center}
\begin{tabular}{@{}ll@{}}
\toprule
Group & Training reward \\
\midrule
Unshaped & $r_{\rm ext}$ \\
Constant cost & $r_{\rm ext}-\lambda c_0$ \\
Direct distance & $r_{\rm ext}-\lambda d'/D$ \\
Potential-based & $r_{\rm ext}+\lambda[\gamma\Phi(s')-\Phi(s)]$ \\
Full KL proxy & $r_{\rm ext}-\lambda f_0(d')/K$ \\
Centered KL proxy & $r_{\rm ext}-\lambda[f_0(d')/K-c_0]$ \\
\bottomrule
\end{tabular}
\end{center}
Here $\Phi(s)=-d/D$, with zero successor potential at every task terminal, including the finite horizon. Thus the discounted sum of the weighted shaping terms telescopes to $-\lambda\Phi(s_0)$~\citep{ng1999policy}. The other penalties apply on the final real transition but not after termination. Full and centered KL differ by exactly $-\lambda c_0$ per transition; this algebraic identity does not imply an additive decomposition of learning performance.

A small finite-horizon MDP will contrast a direct route with a condition requiring an initial detour, using exact dynamic programming to compare optimal action sets and task outcomes. This planned model is distinct from the completed exact examples in this note. The learning experiment will use fully observed symbolic states in DoorKey-8x8, a 640-step task horizon, and the same PPO architecture and goal information in all groups. This planned study introduces six groups, symbolic observations, normalized coefficients, and independent confirmation seeds. These settings differ from the two-condition experiment in Section~\ref{sec:empirical} and should not be used to describe its results. Real goal coordinates and terminal observations must be checked before training. Potential-based invariance concerns the underlying objective, not equality of finite-budget PPO learning curves.

The primary comparison is full KL versus direct distance, each selected using the same grid $\lambda\in\{0,0.003,0.01,0.03,0.1\}$ and three tuning seeds. The centered group inherits the full-KL coefficient and is diagnostic rather than an independently tuned competitor. Ten fresh training seeds will provide paired differences in normalized external-success learning-curve area at 17 fixed checkpoints, each evaluated on the same 100 held-out reset seeds. Each full run has 2,097,152 training transitions. The analysis uses a paired training-seed bootstrap with 20,000 resamples and a 95\% interval; the prespecified practical margin is 0.05 AUC. Inference is conditional on the chosen coefficients and fixed evaluation bank. Distinct reset seeds do not guarantee distinct layouts. This comparison estimates the difference between tuned learning procedures, not the pure effect of KL at a matched penalty amplitude. All selection costs and technical failures must be recorded. The protocol fixes the detailed selection, failure, and interpretation rules before confirmation results are examined.

\subsection{Extensions beyond the planned mechanism study}
The frozen study does not test adaptive coefficient search, learned outcome predictions, or generalization across task families. Evaluating the adaptive search would require grid and random-search controls at matched total environment interactions, with final evaluation separate from selection scores. A learned $p$ would require an explicit outcome horizon, training data and loss, and calibration assessment. Any additional benchmark must specify environment versions, reward transformations, termination handling, and run-level provenance. For example, standard MountainCar assigns $-1$ per step~\citep{mountaincar}; a sparse $+1/0$ version must be labeled as a reward-transformed task.

A full Bayesian extension would need to define the uncertain quantity, prior, observation model, and likelihood, and derive an update consistent with them. It would also need to distinguish posterior inference from the decision rule that chooses a coefficient. A Beta-shaped sampling distribution is insufficient. Such an extension should be evaluated against the explicit heuristic characterized here, with matching tuning budgets and held-out performance measures.

\section{Conclusion}
IncentRL expresses a preference-guided modification of an external-reward objective through an outcome KL penalty. Making its support and boundedness assumptions explicit yields a value-loss bound, a sufficient strict-gap condition for policy preservation, and a cumulative-cost interpretation of the large-weight limit. Exact examples show why unconditional preservation and immediate-KL minimization are not valid substitutes for these statements. The available implementation uses a distance-derived proxy and an outer-loop adaptive search distribution. The prospective study isolates local distance guidance, episode-length costs, and conflicts with task completion. It supplies a route from this analysis to a mechanism-focused empirical evaluation without presupposing a performance gain. The MiniGrid experiment shows higher endpoint success for the nonzero coefficient, while the search summaries show movement toward smaller coefficients. Neither establishes a KL-specific advantage or Bayesian adaptation; those stronger claims remain separate work.

% \section*{Data and code availability}

% \section*{Ethics and research scope}
% This paper reports mathematical analysis, inspection of public software, and computational experiments. It reports no new human-participant or animal study, and makes no claim of biological validation or certified safety of an RL agent.

% \section*{Author contributions}
% \ContributionStatement
% \section*{Funding and competing interests}
% \FundingStatement\par
% \ConflictStatement
% \section*{AI assistance}
% AI assistance was used to revise prose, formulate and check mathematical arguments, inspect the public implementation, verify bibliographic information, and prepare the LaTeX and numerical-checking materials. This disclosure does not imply an independent peer review or reproduction of benchmark experiments. The submitting authors are responsible for validating and approving the final content.

\begingroup
\raggedright
\bibliographystyle{plainnat}
\bibliography{references}
\endgroup

\appendix
\section{Scope and interpretation}\label{app:scope}
The paper studies a preference-penalized objective, its mathematical properties, a distance-based implementation, and MiniGrid learning and coefficient-search behavior. The empirical results support a descriptive comparison in the evaluated setting. They do not establish an online Bayesian update, elimination of coefficient tuning, or a general advantage across tasks. The six-group study in Section~\ref{sec:prospective} is a future evaluation designed to distinguish candidate mechanisms.

The theoretical guarantees depend on their stated assumptions. When external optima are tied, an arbitrary externally optimal policy need not remain optimal at every sufficiently small positive weight. Proposition~\ref{prop:gap} gives a sufficient strict-gap condition, while Proposition~\ref{prop:value} bounds external-value loss without that condition. At large weight, the relevant objective is discounted cumulative cost. Support conditions such as those illustrated in Example~\ref{ex:support} are necessary before finite-return arguments can be applied.

The public implementation inspected here is fixed by the full commit identifier
\begin{center}\small\texttt{f512c32b835fee07832e3044280a00a2381ad8ba}.\end{center}
Static analysis of this version does not establish the exact code used for each experimental run. Reproduction requires run identifiers, generating code versions, per-seed records, and aggregation code. The supplied figures and coefficient summaries do not replace those records.

\section{Additional examples and numerical values}\label{app:examples}
These fully specified examples illustrate the assumptions and limits of Section~\ref{sec:theory}. They are analytic calculations, not RL training experiments.

\begin{example}[A preference support failure]\label{ex:support}
With predicted outcomes $p=(0.3,0.7)$ and preferences $q=(1,0)$, the second outcome has positive predicted mass and zero preferred mass. Hence $\KL(p\Vert q)=\infty$. Replacing $q$ with $(0.999,0.001)$ gives a finite value of approximately $4.22486$. This replacement defines a different objective; it cannot justify finite numerical results from an implementation whose support handling is unspecified.
\end{example}

\begin{example}[Ties need not survive shaping]\label{ex:ties}
A one-step decision has two actions with external reward one. Their outcome distributions are $p_a=(1,0)$ and $p_b=(0,1)$, and $q=(3/4,1/4)$. The corresponding costs are $\log(4/3)$ and $\log4$. Both actions are externally optimal, but action $b$ is strictly suboptimal under every $\beta>0$. This does not contradict Proposition~\ref{prop:value}; the external return can remain unchanged even though the set of optimal policies shrinks.
\end{example}

\begin{example}[Immediate mismatch can be misleading]\label{ex:greedy}
An initial action $a$ incurs cost zero and leads to an absorbing state whose cost is one at every subsequent step. Action $b$ incurs cost one and leads to a zero-cost absorbing state. All external rewards are zero. At $\gamma=0.9$, the discounted costs are nine for $a$ and one for $b$. Immediate-cost minimization chooses the wrong action for cumulative preference alignment. These costs can be realized with a fixed binary preference $q=(e^{-1},1-e^{-1})$, using $p=q$ wherever the cost is zero and $p=(1,0)$ wherever it is one. The outcome variable may be an auxiliary observation emitted on a transition; it need not identify the next state.
\end{example}

\subsection{Values for the main-text trade-off example}
Table~\ref{tab:exact} gives values obtained directly from the two action values in Example~\ref{ex:tradeoff}.
\begin{table}[!htbp]
\centering
\caption{Exact values for Example~\ref{ex:tradeoff}. No agent is trained. The external reward is sacrificed once the weight exceeds $1/\log4$.}
\label{tab:exact}
\begin{tabular}{rlrrr}
\toprule
$\beta$ & Optimal action & External return & Preference cost & Shaped return \\
\midrule
0.00 & Reward-seeking & 1 & 1.38629 & 1.00000 \\
0.25 & Reward-seeking & 1 & 1.38629 & 0.65343 \\
0.50 & Reward-seeking & 1 & 1.38629 & 0.30685 \\
0.75 & Aligned & 0 & 0.00000 & 0.00000 \\
1.00 & Aligned & 0 & 0.00000 & 0.00000 \\
\bottomrule
\end{tabular}
\end{table}

\section{Implementation and reproducibility details}\label{app:implementation}
The public MiniGrid script \texttt{KL\_minigrid\_mp\_bayes.py} is inspected at commit \texttt{f512c32}~\citep{incentrlcode}. The observations below concern that version and do not establish which code generated individual experimental runs.

\subsection{Position lookup and target coordinates}
After stepping the environment, the wrapper attempts to read the agent position and computes distance to $(W-2,H-2)$.
If the position is unavailable, the script uses $(0.5,0.5)$ instead, giving a constant penalty. The relevant wrapper-attribute behavior depends on installed library versions and requires runtime validation. The target coordinate must also be verified against the generated environment. Both agent coordinates and target information are additional task information and should be available to matched baselines.

\subsection{Baseline handling and search accounting}
Several details prevent interpreting the inspected script as a completed controlled comparison. A zero coefficient appended as a baseline is clipped to $10^{-5}$; this search baseline is therefore not exactly unshaped. The result loader uses a wildcard across rounds and selects its first matching file, allowing an earlier result to be attributed to a later round. These are identifiable code paths, not evidence that the experimental results were affected. Exact run identifiers and an independent zero-penalty baseline are needed before reproducing a comparison.

The default configuration specifies five rounds, ten candidates per round, one additional final-round baseline, three training seeds, and two million target training steps per run. This amounts nominally to
\[
 (5\cdot10+1)\cdot3\cdot2{,}000{,}000=306{,}000{,}000
\]
training steps, excluding evaluation and rollout-size rounding. This is configuration arithmetic, not a measured execution total. Comparing the search procedure against a single two-million-step fixed-coefficient run would omit the cost of selection. Held-out final evaluation must also be separated from the scores used to fit the proposal.

\section{Details of the exact calculations}\label{app:verification}
The exact-example table uses coefficient values $0$, $0.25$, $0.50$, $0.75$, and $1$. For each coefficient, the optimal action is determined by comparing the shaped values $0$ and $1-\beta\log4$. Selecting the aligned action gives external return and preference cost both equal to zero; selecting the reward-seeking action gives external return $1$ and preference cost $\log4$. These are deterministic calculations, not estimates from training runs; no training-seed uncertainty or confidence interval is associated with them. In the support-failure example, the KL is infinite because an outcome with positive probability under $p$ has zero probability under $q$.

\section{Supplementary coefficient-search distributions}\label{app:search-distributions}
Figure~\ref{fig:searchsamples} shows the sampled-coefficient distributions for seeds 42 and 43 across four search rounds, supplementing the aggregate summaries in Section~\ref{sec:empirical}. The two seeds displayed here differ in number from the three seeds used for the fixed-coefficient learning curves. These distributions illustrate variation between seeds and a shift toward smaller sampled coefficients; they do not establish posterior inference or a performance advantage from coefficient search.

\begin{figure}[!htbp]
\centering
\includegraphics[width=0.90\linewidth]{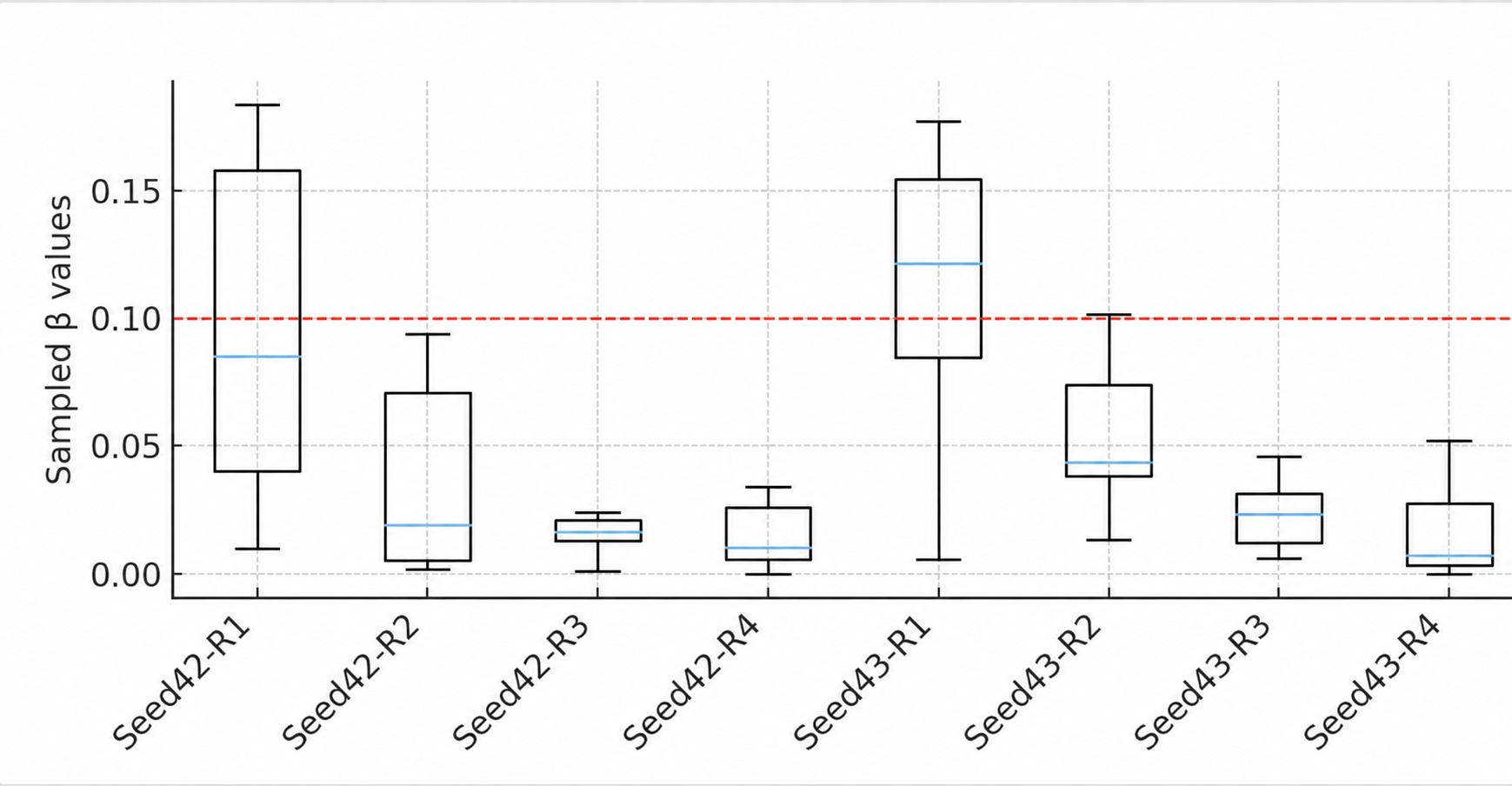}
\caption{Sampled-coefficient distributions for seeds 42 and 43 across four search rounds. The red line marks $\beta=0.1$ as a visual reference, not an empirically established optimum. The plot shows a shift toward smaller sampled values and variation between seeds; it does not establish Bayesian posterior concentration. Sample counts and the precise boxplot convention are not available in the accompanying aggregate records.}
\label{fig:searchsamples}
\end{figure}

\end{document}